\documentclass[10pt,twocolumn,letterpaper]{article}

\usepackage{cvpr}
\usepackage{times}
\usepackage{epsfig}
\usepackage{graphicx}
\usepackage{amsmath}
\usepackage{amsthm}
\usepackage{amssymb}


\usepackage[pagebackref=true,breaklinks=true,letterpaper=true,colorlinks,bookmarks=false]{hyperref}

\cvprfinalcopy 


\usepackage{hyperref}
\usepackage{url}
\usepackage{amsmath}
\usepackage{graphicx}
\usepackage{paralist}
\usepackage{amssymb}
\usepackage{mathtools}
\DeclareMathOperator*{\argmin}{arg\,min}
\DeclareMathOperator*{\rand}{rand}
\newtheorem{mydef}{Definition}

\newtheorem{proposition}{Proposition}

\renewcommand{\labelenumii}{\theenumii}
\renewcommand{\theenumii}{\theenumi.\arabic{enumii}.}

\ifcvprfinal\pagestyle{empty}\fi

\usepackage{xspace}
\usepackage{color}

\newcommand{\mypartitle}[2][2.5]{\vspace*{-#1 ex}~\\{\noindent {\bf #2}}}










\graphicspath{{IMAGES/}}

\begin{document}

\title{Theoretical Guarantees of Deep Embedding Losses Under Label Noise}

\author{
Nam Le$^{1,2}$ \qquad Jean-Marc Odobez$^{1,2}$ \\
$^1$ Idiap Research Institute, Martigny, Switzerland\\
$^2$ \'{E}cole Polytechnique F\'{e}d\'{e}ral de Lausanne, Switzerland \\
{\tt\small \{nle,odobez\}@idiap.ch}
}

\maketitle

\begin{abstract}
Collecting labeled data to train deep neural networks is costly and even impractical for many tasks. Thus, research effort has been focused in automatically curated datasets or unsupervised and weakly supervised learning. 
The common problem in these directions is learning with unreliable label information. In this paper, we address the tolerance of deep embedding learning losses against label noise, i.e. when the observed labels are different from the true labels. Specifically, we provide the sufficient conditions to achieve theoretical guarantees for the 2 common loss functions: marginal loss and triplet loss. From these theoretical results, we can estimate how sampling strategies and initialization can affect the level of resistance against label noise.
The analysis also helps providing more effective guidelines in unsupervised and weakly supervised deep embedding learning. 
\end{abstract}

\section{Introduction}

Embedding learning methods aim at learning a parametrized mapping function from a labeled
set of samples to a metric space, in which samples with the same labels are
close together and samples with different labels are far apart. 
To learn such an embedding with deep neural networks, the most common losses are contrastive loss~\cite{hadsell2006dimensionality} and triplet loss~\cite{Schroff2015}, which optimized based on the distances between pairs or triplets of samples respectively.
Because these losses only require the information whether 2 samples have the same or different labels, they have potential for learning with uncertainty in sample label information. 
In fact, there have been an increasing number of works that use embedding losses for unsupervised tasks by inferring the pair-wise label relationship using other sources of information~\cite{iscen2018mining,jansen2018unsupervised,radenovic2016cnn,wang2015unsupervised,yang2016joint}.
As the inferred label information is not reliable, they can be misleading and will subvert the model during training. 
This leads to our research question:

\emph{Given a dataset with unreliable labels, what are the guarantees when one learns an embedding using triplet loss or contrastive loss?}

This question is becoming more important as there are more datasets where labels are no longer curated by human but by internet queries~\cite{parkhi15deep,song2016deep,yi2014learning}, crossmodal supervision and transfer learning~\cite{le2017improving,Nagrani17}, associated social information~\cite{lee2017large,mahajan2018exploring}, or data mining~\cite{iscen2018mining,wang2017transitive}.

To answer the question, we conducted a theoretical analysis on two different types of embedding losses: marginal loss (a generalized contrastive loss)~\cite{manmatha2017sampling} and triplet loss under label noise from the empirical risk minimization perspective. 
Under this perspective, a loss function is said to be tolerant to label noise of rate $p$ if the minimizer of the empirical risk in the noise free condition is also the minimizer of the empirical risk under noise.
In our analytical results, we have proved the sufficient conditions so that:
\begin{compactitem}
\item Minimizing triplet loss produces the same global optima under label noise if the uniform noise rate is $p < 1 - \sqrt{1 - \frac{1}{\eta}} - \epsilon(K)$.

\item Minimizing marginal loss produces the same local optima under label noise if the uniform noise rate is $p < 1 - \sqrt{1 - \gamma}  - \epsilon(K)$.

\end{compactitem}
in which, $\epsilon$ denotes small constants depending on the number of classes, $\eta$, $\gamma$ depend on the sampling scheme and the loss parameters of each loss function. 
These theoretical results imply that learning embeddings under label noise is heavily influenced by the sampling strategy and marginal loss is robust to label noise only when the initialization is sufficiently good.
%
We have conducted experiments on standard vision datasets to demonstrate how the two embedding losses can be robust to label noise in practice and how different sampling strategies and initialization can affect the guarantees of tolerance.

\section{Related work}

While embedding losses under label noise have not been studied before, there has been a vast literature in analyzing label noise for classification.
For an in-depth introduction to label noise and a comprehensive analysis of traditional algorithms, we refer the readers to the survey of~\cite{frenay2014comprehensive}.

In the context of deep learning, most effort has been dedicated to improve training networks under label noise.
One major direction is to approximate a model of noise to improve training. There are a few examples of this direction. In ~\cite{krause2016unreasonable}, the authors use active learning to select clean data from noisy training set, in ~\cite{xiao2015learning}, there are multiple iterations of training a model, formulating the noise, and retraining, and in ~\cite{liu2016classification} the estimation of noisy labels is used to reweigh the training samples.
Another direction is to improve the networks directly to make them robust to label noise. 
For example, one can add a noise adapting layer to correct the network for the latent noise in training datasets
~\cite{sukhbaatar2014training} or augment a standard deep network with a softmax layer that models the label noise statistics~\cite{jindal2016learning}.

While all the above methods focus on changing the learning model or strategy, there is another interesting body of works in analyzing the loss functions used to train the models~\cite{drory2018resistance,ghosh2017robust,natarajan2013learning}. Here we want to highlight one such work presented in~\cite{ghosh2017robust}. In this work, the authors introduced the notion of symmetric loss functions and proved that such symmetric losses are tolerant to label noise. From the theoretical analysis, they have shown mean absolute error as a more robust alternative for cross entropy loss in training classification deep neural networks.

Within this literature, our paper can be viewed as a counterpart of~\cite{ghosh2017robust} for embedding losses.  In our work, we not only explore how the per sample label noise affects the pair-wise and triplet-wise labels but also provide further analysis on the impact of sampling and initialization, which are integral parts of learning embeddings.

\section{Preliminary}
We will recall the losses used for deep embedding learning and then define the scope of label noise to be used in subsequent sections.
\subsection{Deep embedding learning}
Given a labeled training set of $\{(x_i, y_i)\}$, in which $x_i \in \mathbb{R}^D, y_i \in \{1, 2, .., K\}$, we define an embedding function as  a parameterized $f(x; \theta) \in \mathbb{R}^d$, which maps an instance $x$ into a $d$-dimensional Euclidean space. Additionally, this embedding is constrained to live on the $d$-dimensional hypersphere, \textit{i.e.} $||f(x)||_2 = 1$. Within the hypersphere, the distance between 2 projected instances is simply the Euclidean distance:
\begin{equation}
d(f(x_i; \theta), f(x_j; \theta)) = ||f(x_i; \theta) - f(x_j; \theta) ||_2
\end{equation}

In this new embedding space, we want the intra-class distances $d(f(x_i; \theta), f(x_j; \theta))$, $\forall x_i, x_j / y_i = y_j$ to be minimized and the inter-class distances $d(f(x_i; \theta), f(x_j; \theta))$, $\forall x_i, x_j / y_i \neq y_j$ to be maximized.
%
%
For shorthand, we will simply use $d_{ij}$ to replace $d(f(x_i; \theta), f(x_j; \theta))$.

\subsubsection{Marginal loss}

Marginal loss is the generalized version of contrastive loss~\cite{manmatha2017sampling,deng2017marginal}. This loss aims to separate the distances of positive pairs and negative pairs by a threshold of $\beta$ and with the margin of $\alpha$ on both sides.  To simplify the analysis, we do not consider the learnable $\beta$ parameters in~\cite{manmatha2017sampling}. For pair of samples $\{(x_i, y_i), (x_j, y_j)\}$, we can define the pair label as $t_{ij} = 1$ if $y_i = y_j$ and $t_{ij} = -1$ otherwise. Concretely, the loss for one pair is:

\begin{equation}
\label{eq:marginalloss}
l^M(x_i, x_j, t_{ij}; f) = [(d_{ij} - \beta)t_{ij} + \alpha]_+
\end{equation}

We use the shorthand notation $l_{ij}(t_{ij};\theta)$ to replace $l(x_i, x_j, t_{ij}; \theta)$.

\subsubsection{Triplet loss}
For triplet loss, we do not care about the explicit threshold but impose a relative order on a positive pair and a negative pair. 
A triplet consists of 3 data points: $(x_a, x_p, x_n)$ such that $y_a = y_p$ and $y_a \neq y_n$ and thus, we would like the 2 points $(x_a, x_p)$ to be close together and the 2 points $(x_a, x_n)$ to be further away by a margin $\alpha$ in the embedding space.
Hence, the loss for one triplet is defined as:
\begin{equation}
\label{eq:tripletloss}
l^{T}(x_a, x_p, x_n; \theta) = [d_{ap} - d_{an} + \alpha]_+
\end{equation} 

\subsubsection{Empirical risk minimization}
From the risk minimization perspective, one might aim
at optimizing the total loss over all pairs or triplets respectively.
Let $S$ be the set of all possible triplets (or pairs), the empirical risk to minimize in both cases will be:
\begin{equation}
\label{eq:empirical_risk}
R_L(S; f) = \frac{1}{|S|}\sum_{s\in S}l(s;\theta)
\end{equation}

\subsection{Label noise}
Given a sample $x_i$ with its true label $y_i$, we assume this true label can be wrongly observed with a probability $p$. Let $\hat{y}_i$ be the observed label with the following rule:

\begin{equation}
\hat{y}_i=\left\{
\begin{array}{c l}	
     y_i & \text{with prob. }  1-p_{x_i}\\
     u & \text{with prob. }  p_{{x_i}u} \qquad \forall u \neq y_i
\end{array}\right.
\end{equation}

in which $\sum_u{p_{{x_i}u}} = p_{x_i}$. If the individual noise probability is uniform and independent with the input $x_i$, we can simply write:

\begin{equation}
\label{eq:label_noise}
\hat{y}_i=\left\{
\begin{array}{c l}	
     y_i & \text{with prob. }  1-p\\
     u & \text{with prob. }  \frac{p}{K-1} \qquad \forall u \neq y_i
\end{array}\right.
\end{equation}
While the analysis can be applied on complicated distributions of noise, we assume that the label noise on the individual sample is uniform and independent of $x_i$. Thus we only take into account the sample label noise rate $p$ in Eq.~\ref{eq:label_noise}.

\subsection{Relationship between sample label noise $p$ and pair label noise $q$}

We want to compute given the sample label noise rate of $p$, what is the pair label noise rate $q$. In another word, for a pair of samples with original pair label of $t_{ij} \in \{-1, 1\}$, we want to find the chance that $t_{ij}$ is corrupted into $-t_{ij}$
\vspace{6mm}
\mypartitle{Negative case $t_{ij} = -1$}
The probability a negative pair is corrupted into a positive pair is decomposed into 2 cases:
\begin{compactitem}
\item one of the two samples changes its label, and the new label is the same with the other one: $2p\frac{(1-p)}{K-1}$
\item both samples' labels change into 2 different labels, and both labels are the same: $\frac{p^2(K-2)}{(K-1)^2}$
\end{compactitem}
Hence, in this negative case:
\begin{equation}
\label{eq:negative_prob}
q_{-1} = 2p\frac{(1-p)}{K-1} + \frac{p^2(K-2)}{(K-1)^2}
\end{equation}
%
\mypartitle{Positive case $t_{ij} = 1$}
The probability a positive pair is corrupted into a negative pair is decomposed into when:
\begin{compactitem}
\item one of the two samples changes its label any different label: $2p(1-p)$
\item both samples change into different labels: $p^2\big(1-\frac{2}{K-1}\big)$
\end{compactitem}
In this positive case:
\begin{equation}
\label{eq:positive_prob}
q_{1} = 2p(1-p) + p^2\Big(1-\frac{2}{K-1}\Big)
\end{equation}

\section{Triplet loss under label noise}
\label{sec:triplet}
A triplet is chosen based on the observed labels, $\hat{y}_a$, $\hat{y}_p$, and $\hat{y}_n$. However, as these labels can be noisy, the true labels can be one of 3 following cases:
\begin{compactitem}
\item $y_a = y_p = y_n$
\item $y_a \neq y_p \neq y_n$
\item $y_a \neq y_p$ and $y_a = y_n$
\end{compactitem}
The determine the condition for triplet loss to be robust to label noise, we first decompose it into a combination of auxiliary pair-wise losses and consider the unhinged triplet loss.

\subsection{Auxiliary pair-wise and unhinged triplet loss} 

\begin{mydef}
\label{def:auxiliary}
We define an auxiliary pair-wise loss $l^A$ as:
\begin{equation}
l^{A}(x_i, x_j, t_{ij}; \theta)=\left\{
\begin{array}{c l}	
     d_{ij}t_{ij} & \text{if }  t_{ij} = 1\\
     d_{max} + d_{ij}t_{ij} & \text{if }  t_{ij} = -1
\end{array}\right.
\end{equation}
in which $d_{max} = 2$ is the maximum distance between 2 points on the hypersphere.
Note the property that: 
\begin{equation}
l^{A}_{ij}(-t_{ij}; \theta) = d_{max} - l^{A}_{ij}(t_{ij}; \theta), \forall i, j
\end{equation}
\end{mydef}

\begin{mydef}
We define the label-dependent weighted version of the auxiliary loss as when each pair $(x_i, x_j, t_{ij})$ is weighted differently by $w_{t_{ij}}$, in which the weight only depends on the pair label. Under noise, when a pair changes its pair label from $t_{ij}$ into $-t_{ij}$, its weight only changes from $w_{t_{ij}}$ into $w_{-t_{ij}}$.

Hence, under label noise, the risk to minimize per pair $(x_i, x_j, t_{ij})$ is:
\begin{equation}
\label{eq:sample-risk}
\begin{split}
w_{\hat{t}_{ij}}\hat{l}^A(x_i, x_j, t_{ij}; \theta) =&(1 - q_{t_{ij}}) w_{t_{ij}} l^A_{ij}(t_{ij};\theta) \\
& + q_{t_{ij}} w_{-t_{ij}} l^A_{ij}(-t_{ij};\theta) \\
=\Big(1-q_{t_{ij}} - q_{t_{ij}}&\frac{w_{-t_{ij}}}{w_{t_{ij}}}\Big)w_{t_{ij}}l^A(x_i, x_j, t_{ij}, \theta) \\
+w_{-t_{ij}}&q_{t_{ij}}d_{max} 
\end{split}
\end{equation}
\end{mydef}

\noindent In Eq~\ref{eq:sample-risk}, we have used the fact that $l^{A}_{ij}(-t_{ij}; \theta) = d_{max} - l^{A}_{ij}(t_{ij}; \theta)$ in Def.~\ref{def:auxiliary}. Here one can observe that the risk under noise for one pair $w_{\hat{t}_{ij}}\hat{l}^A(x_i, x_j, t_{ij}; \theta)$ is actually just the scaled clean risk with a constant offset. Therefore, if the clean risk is minimized for that pair, the noisy risk is also minimized. In the next steps, we will prove the same thing for the risk under noise over the whole training set of pairs. 

\begin{mydef}
For a given triplet of $x_a, x_p, x_n / y_a = y_p \wedge y_a \neq y_n$, we define the unhinged triplet loss $l_U$, which can be decomposed into auxiliary pair-wise loss, as:
\begin{equation}
\label{eq:unhinged}
\begin{split}
l^U(x_a, x_p, x_n; \theta) &= d_{max} + d_{ap} - d_{an} + \alpha \\
&=l^A_{ap}(t_{ap}, \theta)+ l^A_{an}(t_{an}, \theta) + \alpha
\end{split}
\end{equation}
\end{mydef}

\begin{mydef}
We define a 1-1 sampling scheme for triplet loss as when for a given positive pair, out of all possible negative pairs of the anchor, only 1 negative pair is chosen.
\end{mydef}

\begin{proposition}
\label{prop:main_triplet}
A minimizer $\theta^{*}$ of the empirical risk with unhinged triplet loss
in the noise free condition $R_{l^U}(S, \theta)$ 
 is also the minimizer
of the empirical risk with unhinged triplet loss
under noise $\hat{R}_{l^U}(S, \theta)$ if:
\begin{compactenum}
\item A 1-1 sampling scheme is used.
\item $\theta^*$ is the minimizer of 2 summations $\mathcal{S}^+$ and $\mathcal{S}^-$:
\begin{compactitem}
\item $\mathcal{S}^+ = \sum_{ij}{l^A(x_i, x_j, t_{ij}, \theta)}$, $\forall (i,j) / t_{ij} = 1$.
\item $\mathcal{S}^- = \sum_{ij}{l^A(x_i, x_j, t_{ij}, \theta)}$, $\forall (i,j) / t_{ij} = -1$.
\end{compactitem}
\end{compactenum}
\end{proposition}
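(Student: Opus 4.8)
The plan is to express both the clean and the noisy empirical risks as nonnegative linear combinations of the two pair-wise sums $\mathcal{S}^+$ and $\mathcal{S}^-$, and then invoke the elementary fact that a point minimizing each of two functions separately also minimizes every nonnegative combination of them. The subtlety the argument must expose is that label noise does not rescale the clean risk by a global factor; it changes the \emph{relative} weighting between the positive-pair sum and the negative-pair sum. This is exactly why condition~2 asks for joint minimization of $\mathcal{S}^+$ and $\mathcal{S}^-$ rather than merely that $\theta^*$ minimize their sum.

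First I would rewrite the clean risk. Using the decomposition of the unhinged loss in Eq.~\ref{eq:unhinged}, each triplet contributes one auxiliary term for its positive pair, one for its negative pair, and the constant $\alpha$. Under a 1-1 sampling scheme every positive pair is matched to exactly one negative pair, so each pair enters the triplet sum with multiplicity one and the number of positive-pair terms equals the number of negative-pair terms. Hence $|S|\,R_{l^U}(S,\theta) = \mathcal{S}^+ + \mathcal{S}^- + \alpha|S|$, i.e. the clean objective is the \emph{equally weighted} sum $\mathcal{S}^+ + \mathcal{S}^-$ up to a $\theta$-independent additive constant. This is the first place the 1-1 assumption is used: any other scheme would assign unequal multiplicities (equivalently, unequal weights $w_{+1}\neq w_{-1}$ in Eq.~\ref{eq:sample-risk}) to the two sums.

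Next I would take the expectation over the label flips to obtain $\hat{R}_{l^U}$. A true positive pair flips to a negative pair with probability $q_1$ and a true negative pair flips with probability $q_{-1}$, the quantities computed in Eq.~\ref{eq:positive_prob} and Eq.~\ref{eq:negative_prob}. Applying the per-pair identity of Eq.~\ref{eq:sample-risk} with equal weights $w_{+1}=w_{-1}$, which relies on the symmetry $l^A_{ij}(-t_{ij};\theta)=d_{max}-l^A_{ij}(t_{ij};\theta)$ from Def.~\ref{def:auxiliary}, each flipped term becomes a rescaled copy of the clean term plus a constant offset proportional to $d_{max}$. Summing over all pairs yields $|S|\,\hat{R}_{l^U}(S,\theta)=(1-2q_1)\mathcal{S}^+ + (1-2q_{-1})\mathcal{S}^- + c$, with $c$ independent of $\theta$. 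Thus the noisy objective is once more a combination of $\mathcal{S}^+$ and $\mathcal{S}^-$, but now with coefficients $(1-2q_1)$ and $(1-2q_{-1})$ in place of $1$ and $1$.

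Finally I would close the argument. Provided $q_1<\tfrac12$ and $q_{-1}<\tfrac12$, the constraint on the pair-noise rates that ultimately yields the bound on the sample-noise rate $p$ through Eq.~\ref{eq:positive_prob} and Eq.~\ref{eq:negative_prob}, both coefficients are strictly positive. Since condition~2 gives $\mathcal{S}^+(\theta^*)\le\mathcal{S}^+(\theta)$ and $\mathcal{S}^-(\theta^*)\le\mathcal{S}^-(\theta)$ for every $\theta$, any nonnegative combination is also minimized at $\theta^*$; in particular $\hat{R}_{l^U}(S,\theta^*)\le\hat{R}_{l^U}(S,\theta)$, which is the claim. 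I expect the main obstacle to be not this final combination step but justifying the clean factorization into $\mathcal{S}^+ + \mathcal{S}^-$: one has to verify that the 1-1 sampling genuinely forces equal multiplicities so that noise merely re-weights two fixed sums, and that the offsets generated by the flips truly collect into a $\theta$-independent constant rather than coupling to the optimization variable.
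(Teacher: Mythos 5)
Your overall architecture matches the paper's proof---decompose the unhinged triplet risk into the pair-wise auxiliary sums, apply the per-pair noise identity of Eq.~\ref{eq:sample-risk}, and conclude by observing that a nonnegative combination of separately minimized sums $\mathcal{S}^+$ and $\mathcal{S}^-$ is minimized at $\theta^*$---but there is a genuine gap at exactly the step you flagged as the main obstacle: the claim that a 1-1 sampling scheme forces equal multiplicities, so that the clean risk is $\mathcal{S}^+ + \mathcal{S}^-$ (up to a constant) and the noisy risk is $(1-2q_1)\mathcal{S}^+ + (1-2q_{-1})\mathcal{S}^- + c$. This claim is false. Under 1-1 sampling every observed-positive pair enters exactly once, but the number of \emph{distinct} negative pairs is roughly $K$ times the number of distinct positive pairs (with $s$ samples in each of $K$ classes, each anchor has $(K-1)s$ candidate negatives versus $s-1$ positives), so each distinct negative pair is sampled with probability only about $1/K$. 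The correct weights are therefore asymmetric, $w_{+1}=1$ and $w_{-1}=\frac{1}{K}$ (the paper's Eq.~\ref{eq:pair_weight}): the number of negative-pair \emph{terms} in the sampled risk equals the number of positive-pair terms, as you say, but those terms cover only a $1/K$ fraction of the pairs that appear in $\mathcal{S}^-$, so the clean risk is in expectation proportional to $\mathcal{S}^+ + \frac{1}{K}\mathcal{S}^-$, not to $\mathcal{S}^+ + \mathcal{S}^-$.

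This asymmetry is not a constant you can absorb, because sampling is driven by the \emph{observed} label: a pair whose label flips also swaps its sampling weight from $w_{t_{ij}}$ to $w_{-t_{ij}}$. That coupling is precisely what Eq.~\ref{eq:sample-risk} encodes and what your equal-weight specialization erases. With the true weights, the coefficient attached to a true negative pair is $1 - q_{-1} - q_{-1}\frac{w_{+1}}{w_{-1}} = 1 - q_{-1} - K q_{-1}$: a corrupted negative pair is promoted from ``sampled with probability $1/K$'' to ``always sampled,'' so $q_{-1}$ is amplified by the factor $K$. This is the binding constraint in Eq.~\ref{eq:min_p}, and after substituting Eqs.~\ref{eq:negative_prob} and~\ref{eq:positive_prob} it gives $1-2p+p^2 \geq r(K)$, i.e.\ the proposition's guarantee $p < 1-\epsilon(K)$, which tends to $1$ as $K$ grows. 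Your coefficients $1-2q_{t_{ij}}$ instead require $q_1 < \frac{1}{2}$, i.e.\ $p < 1 - \frac{1}{\sqrt{2}} \approx 0.29$ independently of $K$; this is neither the stated guarantee nor a statement about the actual 1-1 scheme. Your closing step (separately minimized sums imply minimization of any nonnegative combination) is sound and is the same device the paper uses via $Q=\min_{t_{ij}}\bigl(1-q_{t_{ij}} - q_{t_{ij}}\frac{w_{-t_{ij}}}{w_{t_{ij}}}\bigr)\geq 0$; the proof fails only, but fatally, in the modelling of the sampling weights and of their interaction with the label noise.
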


\begin{proof}
\noindent As $l^U$ can be decomposed as a linear combination of $l^A$, the unhinged empirical risk over all possible triplets can be rewritten as:
\begin{equation}
R_{l^U}(S, \theta) = \frac{1}{Z}\sum_{ij}N^{ij} l^A(x_i, x_j, t_{ij}, \theta)
\end{equation}

\noindent In which, $Z$ is the normalizing number and $N_{ij}$ is the weight as each pair can be chosen multiple times in triplet loss. Assuming that there are uniformly $s$ samples per every class and there are $K$ classes, then $N^{ij} = (K-1)s$ if $t_{ij} = 1$ (each positive pair can be combined with $(K-1)s$ negative pairs) and $N^{ij}=s-1$ if $t_{ij} = -1$ (each negative pair can be combined with $s-1$ positive pairs).

\noindent When a 1-1 sampling scheme is applied, each positive pair is chosen only once, while the probability that one negative pair is chosen is approximately $\frac{1}{K}$ if each class has a uniform number of samples. After the sampling scheme, we have the weighted empirical risk:
\begin{equation}
R_{l^U}(S, \theta) = \frac{1}{Z}\sum_{ij}w_{t_{ij}} l^A(x_i, x_j, t_{ij}, \theta)
\end{equation}
\noindent where $w_{t_{ij}}$ is the weight associated to the probability each pair is chosen:
\begin{equation}
\label{eq:pair_weight}
w_{t_{ij}}=\left\{
\begin{array}{c l}	
     1 & \text{if }  t_{ij} = 1\\
     \frac{1}{K} & \text{if }  t_{ij} = -1
\end{array}\right.
\end{equation}

\noindent The weighted empirical risk under noise will then be:
\begin{equation}
\begin{split}
R_{l^U}(S, \theta) = &\frac{1}{Z}\sum_{ij}w_{\hat{t}_{ij}} \hat{l}^A(x_i, x_j, t_{ij}, \theta) \\
= &\frac{1}{Z}\Big[\sum_{ij}w_{t_{ij}}(1-q_{t_{ij}})l^A(x_i, x_j, t_{ij}, \theta) \\
&+ \sum_{ij}w_{-t_{ij}}q_{t_{ij}}l^A(x_i, x_j, -t_{ij}, \theta)\Big]
\end{split}
\end{equation}

Using the result from the noisy risk per one pair in Eq.~\ref{eq:sample-risk}, we can rewrite the empirical risk under noise as:
\begin{equation}
\label{eq:emp_risk}
\begin{split}
\hat{R}_{l^U}(S, \theta) = \frac{1}{Z}\Big[&\sum_{ij}w_{t_{ij}}(1-q_{t_{ij}})l^A(x_i, x_j, t_{ij}, \theta) \\
+ &\sum_{ij}w_{-t_{ij}}q_{t_{ij}}\big(d_{max} - l^A(x_i, x_j, t_{ij}, \theta)\big)\Big]
\\
= \frac{1}{Z}\sum_{ij}\Big[\Big(
1-&q_{t_{ij}} - q_{t_{ij}}\frac{w_{-t_{ij}}}{w_{t_{ij}}}\Big)w_{t_{ij}}l^A(x_i, x_j, t_{ij}, \theta) \\
+ &w_{-t_{ij}}q_{t_{ij}}d_{max} \Big]
\end{split}
\end{equation}

\noindent Let $\theta^*$ be the optimizer of the clean risk $R_{l^U}(S, \theta)$, which gives us:
\begin{equation}
R_{l^U}(\theta^*) - R_{l^U}(\theta) \leq 0 \qquad \forall \theta \label{eq:risk_min1}
\end{equation}

\noindent We consider the same $\theta^*$ in the noisy risk $\hat{R}_{l^U}(\theta^*) - \hat{R}_{l^U}(\theta)$, and then apply~\ref{eq:emp_risk}:
\begin{equation}
\label{eq:risk_min3}
\begin{split}
\hat{R}_{l^U}(\theta^*&) - \hat{R}_{l^U}(\theta) \\
=\frac{1}{Z}&\sum_{ij}\Big[
\Big(1-q_{t_{ij}} - q_{t_{ij}}\frac{w_{-t_{ij}}}{w_{t_{ij}}}\Big) \\
&\times\Big(w_{t_{ij}}l^A(x_i, x_j, t_{ij}, \theta^*) - w_{t_{ij}}l^A(x_i, x_j, t_{ij}, \theta)\Big)\Big]
\end{split}
\end{equation}

\noindent From the condition, we have $\theta^*$ is also the minimizer of $\mathcal{S}^+$ and $\mathcal{S}^-$, or:
\begin{equation}
\begin{split}
\sum_{ij/t_{ij} = 1}{l^A(x_i, x_j, t_{ij}, \theta^*)} &- \sum_{ij/t_{ij} = 1}{l^A(x_i, x_j, t_{ij}, \theta)}\leq 0 \\
\sum_{ij/t_{ij} = -1}{l^A(x_i, x_j, t_{ij}, \theta^*)} &- \sum_{ij/t_{ij} = -1}{l^A(x_i, x_j, t_{ij}, \theta)}\leq 0
\end{split}
\end{equation}

\noindent Using this fact to upper bound Eq.~\ref{eq:risk_min3}\footnote{More explanation is provided in the supplementary} we can come to:
\begin{equation}
\label{eq:risk_min5}
\begin{split}
\hat{R}_{l^U}(S, \theta^*) - \hat{R}_{l^U}(S, \theta) &\leq \\
\min_{t_{ij}}\Big(1-q_{t_{ij}} - q_{t_{ij}}&\frac{w_{-t_{ij}}}{w_{t_{ij}}}\Big)\Big(R_{l^U}(S, \theta^*) - R_{l^U}(S, \theta)\Big)
\end{split}
\end{equation}

\noindent This upper bound in Eq.~\ref{eq:risk_min5} is reached when the following condition is satisfied:
\begin{equation}
\label{eq:min_p}
Q = \min_{t_{ij}}\Big(1-q_{t_{ij}} - q_{t_{ij}}\frac{w_{-t_{ij}}}{w_{t_{ij}}}\Big) \geq 0
\end{equation}

\noindent From \ref{eq:risk_min1} and \ref{eq:risk_min5}, we have:
\begin{equation}
\hat{R}_{l^U}(\theta^*) - \hat{R}_{l^U}(\theta) \leq Q (R_{l^U}(\theta^*) - R_{l^U}(\theta))  \leq 0 \label{eq:risk_min}\\
\end{equation}

\noindent Hence, $\theta^*$ will also be the minimizer of the noisy risk $\hat{R}_{l^U}(S, \theta)$ if the condition Eq.~\ref{eq:min_p} is met. Using the value of $w_{t_{ij}}$ in Eq.~\ref{eq:pair_weight} , we have:
\begin{equation}
\begin{split}
1-q_{+1} - q_{+1}\frac{w_{-1}}{w_{+1}} = 1-q_{+1} - \frac{q_{+1}}{K} &\geq 0 \\
1-q_{-1} - q_{-1}\frac{w_{+1}}{w_{-1}} = 1-q_{-1} - q_{-1}K &\geq 0 
\end{split}
\end{equation}

\noindent Using the value of $q_{t_{ij}}$ in Eq.~\ref{eq:negative_prob} and~\ref{eq:positive_prob}, and let $r(K)$ be all the terms with $K$ in the denominator, we can simplify $Q \geq 0$ into:
\begin{equation}
\label{eq:min_final}
\begin{split}
1-2p+p^2 - r(K) &\geq 0
\end{split}
\end{equation}

\noindent If we assume $K$ is very large, then $r(K) \approx 0$.\footnote{We henceforth use $r(K)$ and $\epsilon(K)$ to denote small values depending on $K$}
Then Eq.~\ref{eq:min_final} is true when $p < 1 - \epsilon(K)$. This concludes the proof.
\end{proof}

A model can achieve the condition 2 that $\theta^*$ is the minimizer of $\mathcal{S}^+$ and $\mathcal{S}^-$ when on average, all the positive pairs are as close as they can be and the negative pairs are as far as they can be. 
In other words, a ideal noise free model learned with triplet loss should be sufficiently good at separating inputs into their respective clusters to guarantee that the model learned under noise will be robust to noise.
In short, optimizing unhinged triplet loss will be noise tolerant if 2 prerequisites are satisfied: a 1-1 sampling scheme is used and the model is sufficiently good over all pairs of ideal input data.

\subsection{Triplet loss and semi-hard mining}

When applying triplet loss in practice, there are 2 main differences from the theoretical unhinged version: the hinge function and semi-hard triplet mining. We first consider the hinge function. By setting a threshold in choosing the triplets, it gives higher weights to harder negative pairs and lower weights to easier negative pairs with respect to the positive distance. Concretely, for $t_{ij}= -1$, $w_{ij}$ can be $\frac{\eta_{ij}}{K}$ for the harder pairs and $\frac{1}{\eta_{ij} K}$ for easier pairs, with $\eta_{ij}$ being some value greater than 1. Using this new value of $w_{ij}$ in Eq.~\ref{eq:min_p}, we can have the condition:
\begin{equation}
1 - (2p + q^2)\eta - r(K)\geq 0
\end{equation}
This gives us the new bound of sample label noise is:
\begin{equation}
p < 1 - \sqrt{1 - \frac{1}{\eta}} - \epsilon(K)
\end{equation}
with $\eta = max\{\eta_{ij}\}$

Intuitively, because noisy negative pairs have smaller distances, they are more likely to be chosen by a factor of $\eta$. This makes triplet loss less resistant to label noise also by a factor of $\eta$. Though $\eta$ cannot be computed in practice, ideally the more uniformly negative pairs are sampled, the smaller the value of $\eta$ is.

\mypartitle{$\eta$ and sampling strategies.} Due to the fact that the way negative pairs are sampled depends on the mining strategy used, we now investigate 2 different variants of semi-hard triplet mining, namely random semi-hard and fixed semi-hard, as follows:
\begin{compactitem}
\item Random semi-hard: for every positive pair, we randomly sample one negative pair so that the corresponding triplet loss is non negative. Concretely, given the positive pair $x_a, x_p$, the negative index $n^*$ is chosen as:
\begin{equation}
n^* = \rand_n\{n / d_{ap}-d_{an} + \alpha > 0\}
\end{equation}
\item Fixed semi-hard: for every positive pair, we sample the hardest negative pair so that the corresponding triplet loss is still less than $\alpha$ (ie. the hardest semi-hard negative pair). Thus, given the positive pair $x_a, x_p$, the negative index $n$ is chosen as:
\begin{equation}
n^* = \argmin_{n / d_{ap}<d_{an}}{d_{an}}
\end{equation}
\end{compactitem}

One can observe that both semi-hard mining strategies are 1-1 sampling schemes. The negative pairs sampled by fixed semi-hard mining will be more concentrated in the harder range than the negative pairs sampled by random semi-hard mining. Therefore, we can conjecture that fixed semi-hard mining will have a larger skew value $\eta$ than that of random semi-hard, or $\eta_{rand} < \eta_{fixed}$. In the experiments, we will show further how the value of $\eta$ varies based on the sampling scheme in the investigated datasets.

\section{Marginal loss under label noise}

\subsection{Label noise in marginal loss}

Marginal loss is defined based on the label of the pair $t_{ij} \in \{-1, 1\}$. Therefore, if a sample label has a noise probability $p$, a pair label will have a noise probability of $q_{x_i,x_j,t_{ij}}$. 
We assume this noise only depends on the value of the label (ie. class conditional noise), thus simplifying $q_{x_i,x_j,t_{ij}}$ into $q_{t_{ij}}$
Once 2 points $x_i$ and $x_j$ are sampled, the observed pair label $\hat{t}_{ij}$ follows:
\begin{equation}
\label{eq:margin-noise}
\hat{t}_{ij}=\left\{
\begin{array}{c l}	
     t_{ij} & \text{with prob. }  1-q_{t_{ij}}\\
     -t_{ij} & \text{with prob. }  q_{t_{ij}}
\end{array}\right.
\end{equation}

\subsection{Relation from unhinged triplet loss to unhinged marginal loss}
\begin{mydef}
Similarly to the auxiliary pair-wise loss, we define the unhinged marginal loss as follows:
\begin{equation}
l^{M}(x_i, x_j, t_{ij}, \theta)= d_{max} + (d_{ij} - \beta)t_{ij} + \alpha
\end{equation}
We can observe the similar property that:
\begin{equation}
l^{M}_{ij}(-t_{ij}; \theta) = 2d_{max} + 2\alpha - l^{M}_{ij}(t_{ij}; \theta), \forall i, j
\end{equation}
\end{mydef}

For the unhinged marginal loss, we can also define a corresponding 1-1 sampling scheme so that for each positive pair, 1 negative pair is chosen with 1 common end point as in~\cite{manmatha2017sampling}. Using this sampling scheme, one can observe that unhinged marginal loss can be combined into a translated version of unhinged triplet loss. Using the same analysis as in Section~\ref{sec:triplet}, we can show that minimizing empirical risk with unhinged marginal loss is under noise will yield the same minimizer as with minimizing empirical risk without noise.

\subsection{Marginal loss with hinge function and mining}

When the hinge function is applied, some pairs will yield 0 loss. This is similar to saying that some easy pairs are filtered out, and the more difficult pairs will be sampled more. Concretely, for a pair $(x_i, x_j, t_{ij})$:
\begin{compactitem}
\item $t_{ij} = 1$: $w_{ij} = 0$ if $d_{ij} \leq \beta - \alpha$ and $w_{ij} = \eta^{+}_{ij}$ otherwise, with $\eta^{+}_{ij} > 1.0$
\item $t_{ij} = -1$: $w_{ij} = 0$ if $d_{ij} \geq \beta + \alpha$ and $w_{ij} = \frac{\eta^{-}_{ij}}{K}$ otherwise, with $\eta^{-}_{ij} > 1.0$.
\end{compactitem}

We can see that over some subsets of pairs, $w_{ij} = 0$, therefore $\frac{\hat{w}_{ij}}{w_{ij}}$ as in Eq.~\ref{eq:risk_min3} does not exist for these pairs. To deal with these pairs, we divide the input set $\mathcal{T}$ into 3 sets: 
\begin{compactitem}
\item The set in which the pair weight is positive in the noise free case, ie. $w_{ij} = \eta_{ij}$, but is 0 under the noisy case, ie. $\hat{w}_{ij}  = 0$. This set is denoted as $\mathcal{T}_m^+$. The noisy risk for a pair $(x_i, x_j, t_{ij}) \in \mathcal{T}_m^+$ is:
\begin{equation}
\begin{split}
\label{eq:T_plus}
\hat{w}_{ij}\hat{l}_{ij}(t_{ij}, \theta) &= (1-q_{t_{ij}})w_{ij}l_{ij}(t_{ij}, \theta)
\end{split}
\end{equation}
\item The set in which the pair weight is 0 in the noise free case, ie. $w_{ij}  = 0$, but is positive under the noisy case, ie. $\hat{w}_{ij} = \eta_{ij}$. This set is denoted as $\mathcal{T}_m^-$. The noisy risk for a pair $(x_i, x_j, t_{ij}) \in \mathcal{T}_m^-$ is 
\begin{equation}
\label{eq:T_minus}
\begin{split}
\hat{w}_{ij}\hat{l}_{ij}(t_{ij}, \theta) &= q_{t_{ij}}\hat{w}_{ij}l_{ij}(-t_{ij}, \theta)
\end{split}
\end{equation}
\item The set $\mathcal{T}_m$, in which the pair weight is positive both in the noise free and the noisy cases:
\begin{equation}
\label{eq:T_m}
\begin{split}
\hat{w}_{ij}\hat{l}(x_i, x_j, t_{ij}; \theta) &= (1 - q_{t_{ij}} - q_{t_{ij}}\frac{\hat{w}_{ij}}{w_{ij}})l_{ij}(t_{ij};\theta) \\
& + 2(d_{max} + 2\alpha)q_{t_{ij}}\hat{w}_{ij}
\end{split}
\end{equation}
\end{compactitem}

Using Eq.~\ref{eq:T_plus}, ~\ref{eq:T_minus}, and~\ref{eq:T_m} one can define the empirical risk under noise over the whole training set $\mathcal{T} = \mathcal{T}_m^+ \bigcup \mathcal{T}_m^- \bigcup \mathcal{T}_m$ for marginal loss $\hat{R}_{l^M}(\mathcal{T},\theta)$.

Then, we consider the local minimizer $\theta^*$ of the noise free loss $R_{l^M}(\mathcal{T},\theta)$ where $||\theta^*-\theta||^2_2 < \epsilon$. Assuming that the locality $\epsilon$ is small enough so that the pair subsets $\mathcal{T}_m^+, \mathcal{T}_m^-, \mathcal{T}_m$ are the same for $\theta$ and $\theta^*$.
Consequently, we can expand $\hat{R}_l(\mathcal{T},\theta^*) -  \hat{R}_l(\mathcal{T},\theta)$ similarly to Eq.~\ref{eq:risk_min3} as:

\begin{equation}
\label{eq:risk_min4}
\begin{split}
\hat{R}_{l^M}&(\mathcal{T},\theta^*) -  \hat{R}_{l^M}(\mathcal{T},\theta) \leq \\
&Q\frac{| \mathcal{T}_m \bigcup \bar{\mathcal{T}}_m^+|}{|\mathcal{T}|}\Big[
\big(R_{l^M}(\mathcal{T},\theta^*) -  R_{l^M}(\mathcal{T},\theta)\big) \Big] \\
+\frac{1}{|\mathcal{T}|}&\Big[
\sum_{i,j\in \mathcal{T}_m^+}(1 - q_{t_{ij}} - Q)w_{ij}\big(l_{ij}(t_{ij}, \theta^*) -  l_{ij}(t_{ij}, \theta)\big) \Big]  \\
+ \frac{1}{|\mathcal{T}|}&\Big[\sum_{i,j\in \bar{\mathcal{T}}_m^-}{q_{t_{ij}}\hat{w}_{ij}\big(l_{ij}(-t_{ij}, \theta^*)-l_{ij}(-t_{ij}, \theta)\big)}\Big]
\end{split}
\end{equation}

Given that $R_{l^M}(\mathcal{T},\theta^*) -  R_{l^M}(\mathcal{T},\theta)) \leq 0$, the condition in which $\hat{R}_{l^M}(\mathcal{T},\theta^*) -  \hat{R}_{l^M}(\mathcal{T},\theta) \leq 0$ is when the sum of the last 2 residual terms in Eq.~\ref{eq:risk_min4} is also negative.
Though the condition on the residual cannot be proven analytically, we observe that:
\begin{compactitem}
\item In practice, as $l_{ij}(-t_{ij}, \theta)$ is bounded, therefore an corrupted pair also contributes only a bounded positive value.
\item Given that the condition $\theta^*$ is sufficiently good, the first residual term will contribute negative to the sum.
\item If $z|\mathcal{T}_{m}^+| > |\mathcal{T}_{m}^-|$ with some value $z$, ie. there are enough correct pairs to counter negative pairs with positive weights, the first correct term of the residual could outweigh the second noisy term, thus assuring that the residual sum is negative.
\end{compactitem}
Further expansion of Eq.~\ref{eq:risk_min4} and the estimation of $z$ are provided in the supplementary to support the observations.

This additional condition on the residual varies based on practical properties of datasets and can be satisfied in practice when the noise rate is small. 
We conjecture that the resistance of marginal loss in a small locality dominantly depends on the 2 prerequisites: $Q \geq 0.5$ and the model being sufficiently good in the ideal clean dataset.

For the condition $Q \geq 0.5$, applying the new values of $w_{ij}$ into Eq.~\ref{eq:min_p}, we can calculate the new bound using the non-zero weights $w_{ij}$ to be:
\begin{equation}
\begin{split}
p &< 1 - \sqrt{1 - \frac{\eta^-}{\eta^+}}  - \epsilon(K) \\
 &= 1 - \sqrt{1 - \gamma} - \epsilon(K)
\end{split}
\end{equation}
with $\eta^+ = max\{\eta^{+}_{ij}\}$, $\eta^- = min\{\eta^{-}_{ij}\}$, $\eta^- \leq \eta^+$, and $\gamma = \frac{\eta^-}{\eta^+}$. To achieve a high bound, we need to $\gamma$ to be close to $1$. However, we cannot tune the values of $\eta^-$ and $\eta^+$. Realistically, a sampling method should choose more diverse positive pairs (minimizing $\eta^+$) as well as sufficiently diverse negative pairs with respect to the positive pairs ($\eta^- \leq \eta^+$).
%

\section{Experiments}

\begin{figure*}
\centering
a)
\epsfig{file=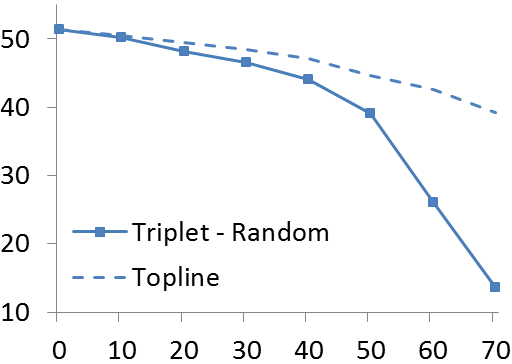,height=25mm}
\quad
b)
\epsfig{file=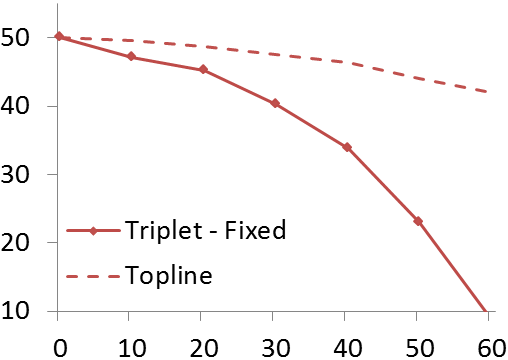,height=25mm}
\quad
c)
\epsfig{file=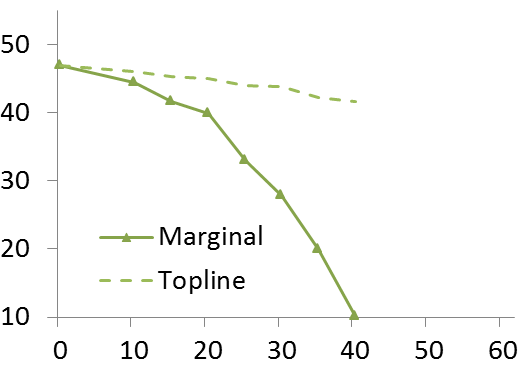,height=25mm}
\quad
d)
\epsfig{file=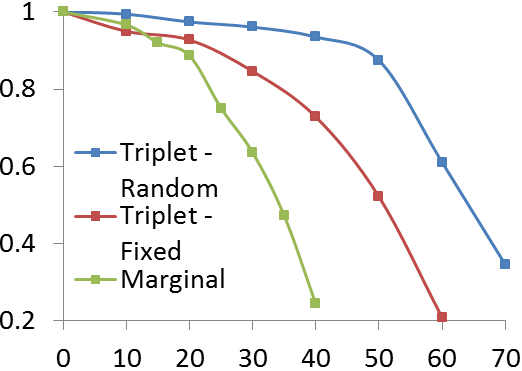,height=25mm}
\caption{Retrieval results reported on Standford Online Products dataset. $x$-axis: noise rate $p$. (a-c)  $y$-axis: Rec@1 of triplet loss with random semi-hard mining, fixed semi-hard mining, and marginal loss with random semi-hard mining, respectively. (d)  $y$-axis: the ratio of Rec@1 for noise rate $p$ over Rec@1 when there are $1-p$ data samples (topline) for all three cases.}
\label{fig:sop_rn34}
\end{figure*}

\begin{figure*}
\centering
a)
\epsfig{file=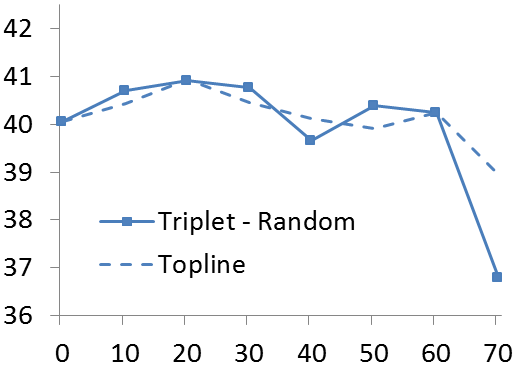,height=25mm}
\quad
b)
\epsfig{file=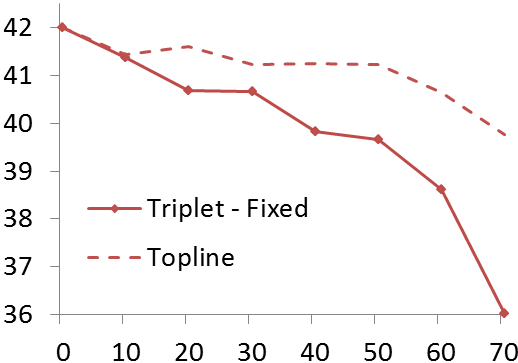,height=25mm}
\quad
c)
\epsfig{file=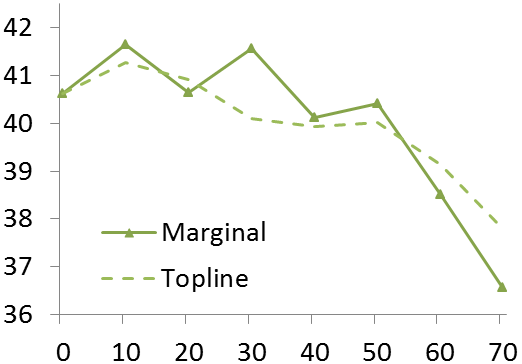,height=25mm}
\quad
d)
\epsfig{file=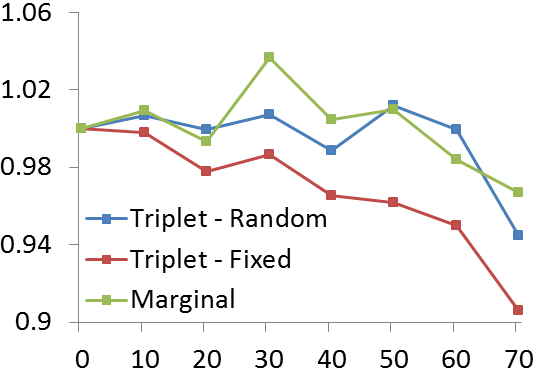,height=25mm}
\caption{Retrieval results reported on CUB-200-2011 birds dataset. $x$-axis: noise rate $p$.(a-c)  $y$-axis: Rec@1 of triplet loss with random semi-hard mining, fixed semi-hard mining, and marginal loss with random semi-hard mining, respectively. (d)  $y$-axis: the ratio of Rec@1 for noise rate $p$ over Rec@1 when there are $1-p$ data samples (topline) for all three cases.}
\label{fig:cub_init_rn34}
\end{figure*}

\subsection{Preliminary settings}

\mypartitle{Datasets.} We illustrate the guarantees through experiments on 3 datasets: Stanford online product (SOP) dataset \cite{song2016deep}, CUB-200-2011 bird dataset~\cite{WahCUB_200_2011}, and Oxford-102 Flowers dataset~\cite{nilsback2008automated}.

\mypartitle{Metrics.} For the image retrieval task, we use the Recall@K as in \cite{song2016deep}. For the clustering task, we use the Normalized Mutual Information (NMI) score to evaluate the quality of clustering alignments given a labled groundtruth clustering~\cite{manmatha2017sampling}. We use K-means algorithm for clustering.

\mypartitle{Architecture and training.} We use the ResNet architecture with 34 layers~\cite{he2016deep}. The optimizer is RMSProp~\cite{tieleman2012lecture} and the minibatch size is 60 (12 classes x 5 images). For the CUB and Flowers datasets, we use the pretrained classification model on ImageNet.

\mypartitle{Loss parameters.} For triplet loss, we choose $\alpha = 0.2$. For marginal loss, $\beta=1.4$ and $\alpha=0.2$.

\mypartitle{Reference topline.} As having noisy labels also means there are fewer correct data points for training. Hence, to disentangle the effect of lacking data, we compare the result of learning with noise rate $p$ with the topline result of learning with only $1-p$ clean random data samples.

\subsection{Analysis}

\mypartitle{Triplet loss.} In the image retrieval task, triplet loss is robust to label noise and varies differently based on each dataset and the sampling strategy. When there is no label noise, triplet loss with fixed semi-hard mining performs slightly better than with random semi-hard mining. However, when there is label noise, fixed semi-hard deteriorates faster. In SOP dataset (Fig.~\ref{fig:sop_rn34}-a, b), the gap between learning with noise and learning with fewer clean labels widen significantly after 30\% for fixed semi-hard mining while random semi-hard mining still retains good relative performance after 50\%. The difference can be examined directly by comparing the ratio between accuracy with noise over accuracy with fewer samples in Fig.~\ref{fig:sop_rn34}-d, where fixed semi-hard mining is clearly below random semi-hard mining. The same behaviour is observed in CUB dataset (Fig.~\ref{fig:cub_init_rn34}-a, b, d) and Flowers dataset (Fig.~\ref{fig:oxf_init_rn34}-a, b, d)
This result shows how different sampling strategies affect the robustness to label noise differently. It also corroborates our conjecture that $\eta_{fixed} > \eta_{rand}$.

\mypartitle{Marginal loss.} Compared to triplet loss, marginal loss exhibits a higher variance of robustness across datasets in image retrieval task. In SOP, marginal loss degrades much faster than both versions of triplet loss, as shown in Fig.~\ref{fig:sop_rn34}-c,d. 
Meanwhile in CUB dataset, marginal loss is relatively as robust as triplet loss with random semi-hard mining, with the breakpoint of 50\% comparing to 60\% in triplet loss (Fig.~\ref{fig:cub_init_rn34}-c,d).
In Flowers dataset, even though the performance of marginal loss decreases slightly faster than that of triplet loss, it may due to the fact that marginal loss performs worse with fewer data rather than due to noise (Fig.~\ref{fig:oxf_init_rn34}-c). When comparing the relative measurement, it still shows the same degree of robustness with triplet loss.
To explain the discrepancy across datasets, we consider the fact that the guarantee for marginal loss is only applicable for local minimizer. Because in CUB and Flower datasets, we start with the pretrained model on ImageNet, which means the initial $\theta$ is already good and the final $\theta^*$ is reachable through local optimizing steps.

\begin{figure*}
\centering
a)
\epsfig{file=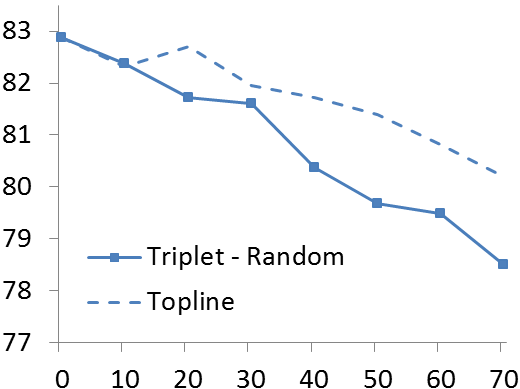,height=25mm}
\quad
b)
\epsfig{file=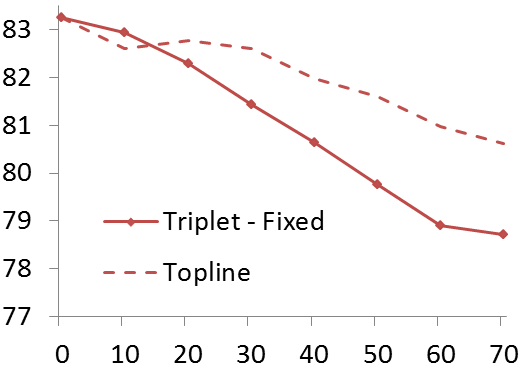,height=25mm}
\quad
c)
\epsfig{file=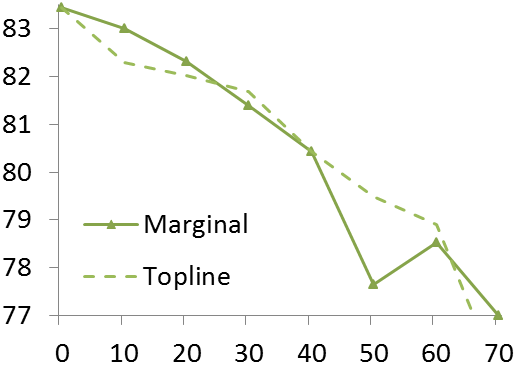,height=25mm}
\quad
d)
\epsfig{file=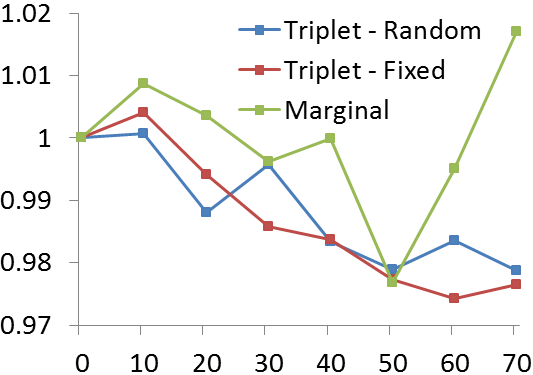,height=25mm}
\caption{Retrieval results reported on Oxford-102 flowers dataset. $x$-axis: noise rate $p$.(a-c)  $y$-axis: Rec@1 of triplet loss with random semi-hard mining, fixed semi-hard mining, and marginal loss with random semi-hard mining, respectively. (d)  $y$-axis: the ratio of Rec@1 for noise rate $p$ over Rec@1 when there are $1-p$ data samples (topline) for all three cases.}
\label{fig:oxf_init_rn34}
\end{figure*}

\mypartitle{Additional results on clustering tasks.} In Fig.~\ref{fig:nmi_all}, we show the ratios of the NMIs under noise rate $p$ over the NMIs of missing rate $p$ of data (topline) for all investigated methods in all 3 datasets. Overall, the results in the clustering task agree with our conclusions from the image retrieval task. Using random semi-hard mining with triplet loss yields more diverse negative pairs, making it more robust to label noise than fixed semi-hard mining. Marginal loss with good initialization shows a statistically similar level of robustness with triplet loss. More detailed figures on the clustering task are provided in the supplementary.

\begin{figure}
\centering
a)
\epsfig{file=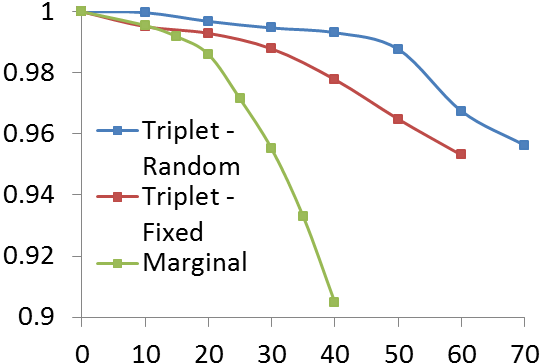,height=25mm}
b)
\epsfig{file=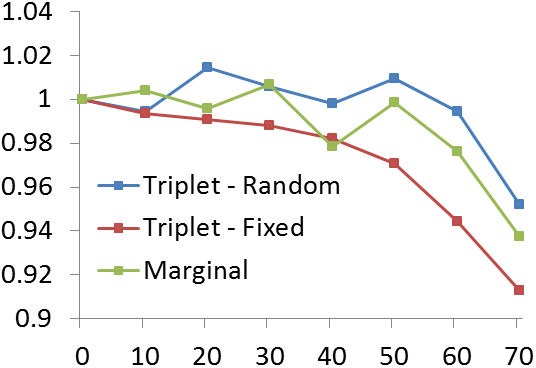,height=25mm}
\\
c)
\epsfig{file=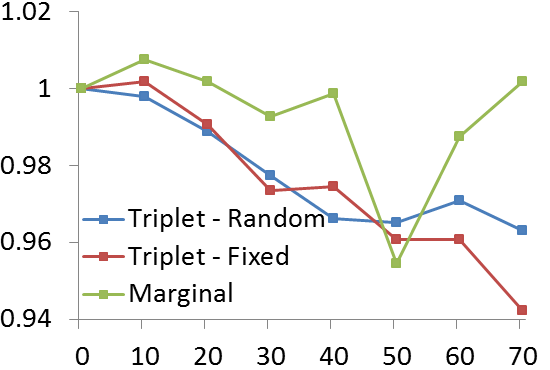,height=25mm}
\caption{Clustering results $x$-axis: noise rate $p$, $y$-axis: the ratio of NMI for noise rate $p$ over NMI when there are $1-p$ data samples (topline) for triplet loss with random semi-hard sampling, fixed semi-hard mining, and marginal loss with random semi-hard sampling. (a-c) results on the SOP, CUB, and Flowers datasets, respectively}
\label{fig:nmi_all}
\end{figure}

\section{Discussion}

A major impact of our theoretical results is in creating effective guidelines to learn embeddings with 
unsupervised and weakly supervised datasets as follows:

\mypartitle{Having high precision in sampling positive pairs.} From Eq.~\ref{eq:min_p}, we can show that the probability a negative pair is corrupted into a positive pair has a dominant impact on the learned embedding. Intuitively, a wrong positive pair is always sampled while a wrong negative pair may not be sampled at all. 
Therefore in practice, when the labels are known to be unreliable, it is better to keep the high precision when choosing positive pairs. This is the reason why the systems proposed in ~\cite{jansen2018unsupervised} worked, as the positive pairs are created by self-transformation or selected with high fidelity. Even with lower precision, we have shown in the experiments that the guarantee can be as high as $50\%$ noise rate for triplet loss with random semi-hard mining. This fact can be used to explain why the unsupervised mining method in~\cite{iscen2018mining} with positive pair noise rate of $q\approx 60\%$ ($p\approx 36.8\%$) can achieve the same or even better accuracy than supervised methods with clean data.

\mypartitle{Using pretrained models.} Because the ideal model should be sufficiently good, we conjecture that the better starting from a good initialization of parameters $\theta$ can help to the increase resistant level. This is also quite intuitive because pretrained models are learned with clean data
Using good initialization is even more significant for marginal loss because of the local condition.
Hence, it is advised to use pretrained models when learning with unreliable labels. In fact, this has been already a standard practice in~\cite{iscen2018mining,lee2017large}.

\section{Conclusion}
We have provided the theoretical guarantees of the 2 common losses for embedding learning: triplet loss and marginal loss. 
Our analysis shows a dependence between the sampling strategies and the resistance against label noise in embedding learning. 
Such guarantees are useful for practical tasks when we want to learn a good embedding (without any change in the algorithm or network architecture) even if the training set labels are noisy.
We demonstrate our results on standard image retrieval datasets. Furthermore, we analyze and provide practical guidelines for future works in unsupervised and weakly supervised learning.
There are several potential research directions to extend our work.
The first one is to investigate other sampling strategies such as in~\cite{manmatha2017sampling,harwood2017smart}.
Other embedding losses, for example quadruple loss~\cite{chen2017beyond}, N-pair loss~\cite{sohn2016improved}, or marginal loss with learnable $\beta$~\cite{manmatha2017sampling}.

%
%
%
%
%
%

\renewcommand{\labelenumii}{\theenumii}
\renewcommand{\theenumii}{\theenumi.\arabic{enumii}.}

%
%
%
%
%
%
%

\renewcommand\thesection{\Alph{section}}
\setcounter{section}{0}

\section{Supplementary Material} 

\subsection{Upper bound in Equation 21 - Section 4.1}

From Equation 19 - Section 4.1, we have:
\begin{equation}
\begin{split}
\hat{R}_{l^U}(\theta^*&) - \hat{R}_{l^U}(\theta) \\
=\frac{1}{Z}&\sum_{ij}\Big[
\Big(1-q_{t_{ij}} - q_{t_{ij}}\frac{w_{-t_{ij}}}{w_{t_{ij}}}\Big) \\
&\times\Big(w_{t_{ij}}l^A(x_i, x_j, t_{ij}, \theta^*) - w_{t_{ij}}l^A(x_i, x_j, t_{ij}, \theta)\Big)\Big]
\end{split}
\end{equation}

The set of pairs $(i,j)$ can be divided into the positive pairs, $t_{ij}=1$, and negative pairs, $t_{ij}=-1$. Hence the empirical risk difference is also split into:
\begin{equation}
\label{eq:temp_1}
\begin{split}
\hat{R}_{l^U}(\theta^*&) - \hat{R}_{l^U}(\theta) \\
=\frac{1}{Z}\Big[&\sum_{ij/t_{ij}=1} \Big(1-q_{+1} - q_{+1}\frac{w_{-1}}{w_{+1}}\Big)w_{t_{ij}} \\
&\times\Big(l^A(x_i, x_j, t_{ij}, \theta^*) - l^A(x_i, x_j, t_{ij}, \theta)\Big) \\
&+\sum_{ij/t_{ij}=-1} \Big(1-q_{-1} - q_{-1}\frac{w_{+1}}{w_{-1}}\Big)w_{t_{ij}} \\
&\times\Big(l^A(x_i, x_j, t_{ij}, \theta^*) - l^A(x_i, x_j, t_{ij}, \theta)\Big)\Big] \\
=\frac{1}{Z}\Big[&\Big(1-q_{+1} - q_{+1}\frac{w_{-1}}{w_{+1}}\Big)w_{+1} \\
&\times\sum_{ij/t_{ij}=1}\Big(l^A(x_i, x_j, t_{ij}, \theta^*) - l^A(x_i, x_j, t_{ij}, \theta)\Big) \\
&+\Big(1-q_{-1} - q_{-1}\frac{w_{+1}}{w_{-1}}\Big)w_{-1} \\
&\times\sum_{ij/t_{ij}=-1}\Big(l^A(x_i, x_j, t_{ij}, \theta^*) - l^A(x_i, x_j, t_{ij}, \theta)\Big)\Big] \\
\end{split}
\end{equation}

We define the 2 summations as $\mathcal{S}^+$ and $\mathcal{S}^-$:
\begin{equation}
\begin{split}
\mathcal{S}^+ = \sum_{ij/t_{ij} = 1}\Big(l^A(x_i, x_j, t_{ij}, \theta^*) &- l^A(x_i, x_j, t_{ij}, \theta)\Big)\\
\mathcal{S}^- = \sum_{ij/t_{ij} = -1}\Big(l^A(x_i, x_j, t_{ij}, \theta^*) &- l^A(x_i, x_j, t_{ij}, \theta)\Big)
\end{split}
\end{equation}
which will simplify the empirical risk difference into:
\begin{equation}
\begin{split}
\hat{R}_{l^U}(\theta^*) - \hat{R}_{l^U}(\theta) = \frac{1}{Z}\Big[&\Big(1-q_{+1} - q_{+1}\frac{w_{-1}}{w_{+1}}\Big)w_{+1} \mathcal{S}^+ \\
&+\Big(1-q_{-1} - q_{-1}\frac{w_{+1}}{w_{-1}}\Big)w_{-1} \mathcal{S}^- \\
\end{split}
\end{equation}

\noindent From the condition, we have $\theta^*$ is also the minimizer of $\mathcal{S}^+$ and $\mathcal{S}^-$, or:
\begin{equation}
\begin{split}
\mathcal{S}^+ \leq 0 \quad \text{and} \quad \mathcal{S}^- \leq 0
\end{split}
\end{equation}

Because $\mathcal{S}^+$ and $\mathcal{S}^-$ are negative, the smaller the multiplier is, the bigger the value of the empirical risk difference. By choosing the multiplier to be the minimum value of $1-q_{t_{ij}} - q_{t_{ij}}\frac{w_{-t_{ij}}}{w_{t_{ij}}}$, we achieve the upper bound as in Equation 21 - Section 4.1:

\begin{equation}
\begin{split}
\hat{R}_{l^U}(S, \theta^*) - \hat{R}_{l^U}(S, \theta&) \\
\leq \min_{t_{ij}}\Big(1-q_{t_{ij}} &- q_{t_{ij}}\frac{w_{-t_{ij}}}{w_{t_{ij}}}\Big)(\mathcal{S}^+ + \mathcal{S}^-) \\
= \min_{t_{ij}}\Big(1-q_{t_{ij}} &- q_{t_{ij}}\frac{w_{-t_{ij}}}{w_{t_{ij}}}\Big)\\
\times\sum_{ij}\Big(l^A(&x_i, x_j, t_{ij}, \theta^*) - l^A(x_i, x_j, t_{ij}, \theta)\Big) \\
= \min_{t_{ij}}\Big(1-q_{t_{ij}} - &q_{t_{ij}}\frac{w_{-t_{ij}}}{w_{t_{ij}}}\Big)\Big(R_{l^U}(S, \theta^*) - R_{l^U}(S, \theta)\Big)
\end{split}
\end{equation}

\subsection{Expansion of Equation 36 - Section 5.3}

Let the residual term in Equation 35 - Section 5.3 be $r$, which is:
\begin{equation}
\begin{split}
r = \frac{1}{|\mathcal{T}|}&\Big[
\sum_{i,j\in \mathcal{T}_m^+}(1 - q_{t_{ij}} - Q)w_{ij}\big(l_{ij}(t_{ij}, \theta^*) -  l_{ij}(t_{ij}, \theta)\big) \Big]  \\
+ \frac{1}{|\mathcal{T}|}&\Big[\sum_{i,j\in \bar{\mathcal{T}}_m^-}{q_{t_{ij}}\hat{w}_{ij}\big(l_{ij}(-t_{ij}, \theta^*)-l_{ij}(-t_{ij}, \theta)\big)}\Big]
\end{split}
\end{equation}

Here, we want to find the condition for $r$ to be negative, or when the first term outweighs the second term. Because the difference in pair-wise loss, ie. $l_{ij}(t_{ij}, \theta^*) -  l_{ij}(t_{ij}, \theta)$, is bounded, we only need to consider when the first multiplier $(1 - q_{t_{ij}} - Q)w_{ij}$ is bigger than the second multiplier $q_{t_{ij}}\hat{w}_{ij}$. 

To this end, we first need to compute the value of $Q$. As $K$ is assumed to be very big, we can approximate the multipliers $1-q_{t_{ij}} - q_{t_{ij}}\frac{w_{-t_{ij}}}{w_{t_{ij}}}$ for the positive and negative cases as:
\begin{equation}
\begin{split}
1-q_{+1} - q_{+1}\frac{w_{-1}}{w_{+1}} &\approx 1 - 2p + p^2 - (2p - p^2)\frac{\eta^-_{ij}}{K\eta^+_{ij}} \\
1-q_{-1} - q_{-1}\frac{w_{+1}}{w_{-1}} &\approx 1 - \frac{2p - p^2}{K} - (2p - p^2)\frac{\eta^+_{ij}}{\eta^-_{ij}}
\end{split}
\end{equation}

By setting $\eta^+ = max\{\eta^{+}_{ij}\}$, $\eta^- = min\{\eta^{-}_{ij}\}$ and assuming that $\eta^- \leq \eta^+$,
we can approximate the minimum value of $Q$ as:
\begin{equation}
\begin{split}
Q &= \min_{t_{ij}}\Big(1-q_{t_{ij}} - q_{t_{ij}}\frac{w_{-t_{ij}}}{w_{t_{ij}}}\Big) \\
&\approx 1 - (2p - p^2)\frac{\eta^+}{\eta^-}
\end{split}
\end{equation}

Consider the set $\mathcal{T}_m^+$, the multiplier for each label $-1$ or $1$ of one pair is:
\begin{equation}
\begin{split}
(1 - q_{+1} - Q)w_{ij} &\approx (2p - p^2)(\frac{\eta^+}{\eta^-} - 1)\eta^+_{ij} \\
(1 - q_{-1} - Q)w_{ij} &\approx (2p - p^2)\frac{\eta^+}{\eta^-}\frac{\eta^-_{ij}}{K}
\end{split}
\end{equation}

Consider the set $\mathcal{T}_m^-$, the multiplier for each label $-1$ or $1$ of one pair is:
\begin{equation}
\begin{split}
q_{+1}\hat{w}_{ij} &\approx (2p - p^2)\frac{\eta^-_{ij}}{K}\\
q_{-1}\hat{w}_{ij} &\approx \frac{2p - p^2}{K}\eta^+_{ij}
\end{split}
\end{equation}
By assuming that $\eta^+_{ij}$ and $\eta^-_{ij}$ are bounded, we can choose $z = \frac{1}{\frac{\eta^+}{\eta^-} - 1}$. With high probability, we can have:
\[z(1 - q_{t_{ij}} - Q)w_{ij} \geq q_{t_{ij}}\hat{w}_{ij}\]
Therefore, the residual is negative with high probability when  $z|\mathcal{T}_m^+| > |\mathcal{T}_m^-|$. This means that even though harder pairs are more likely to be error, as long as we have sufficiently many good pairs to counter-balance it, the local optimization of empirical risk with label noise will still yields the same local minimizer.

\subsection{Clustering experiment}

\begin{figure}
\centering
a)
\epsfig{file=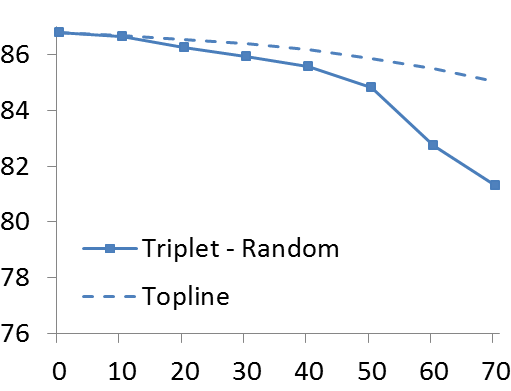,height=25mm}
b)
\epsfig{file=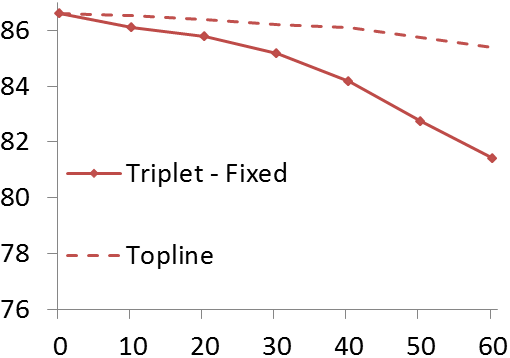,height=25mm}
\quad
c)
\epsfig{file=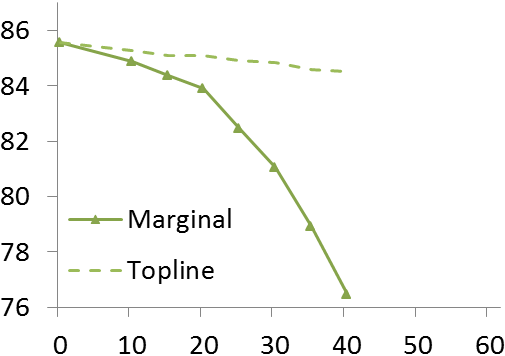,height=25mm}
d)
\epsfig{file=nmi-sop-all.png,height=25mm}
\caption{Clustering results reported on Standford Online Products dataset. $x$-axis: noise rate $p$, $y$-axis: NMI.(a-c) NMI of triplet loss with random semi-hard mining, fixed semi-hard mining, and marginal loss with random semi-hard mining, respectively. (d) the ratio of NMI for noise rate $p$ over NMI when there are $1-p$ data samples (topline) for all three cases.}
\label{fig:nmi_sop_rn34}
\end{figure}

\begin{figure}
\centering
a)
\epsfig{file=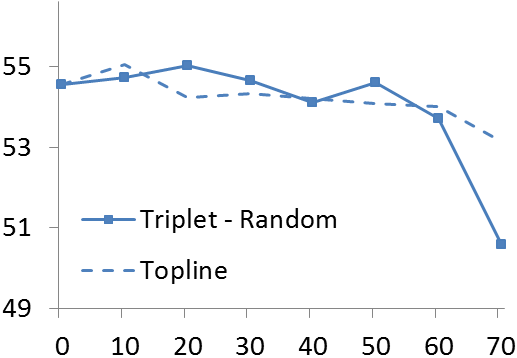,height=25mm}
b)
\epsfig{file=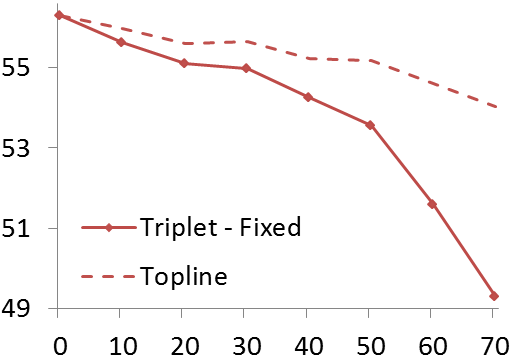,height=25mm}
\quad
c)
\epsfig{file=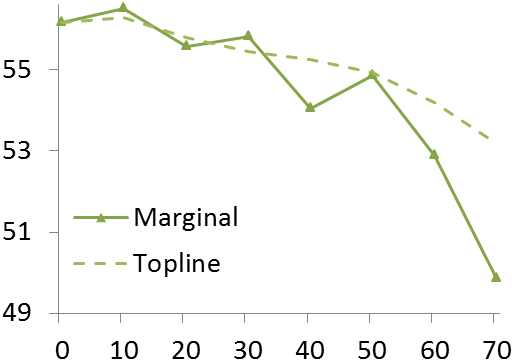,height=25mm}
d)
\epsfig{file=nmi-cub-all.png,height=25mm}
\caption{Clustering results reported on CUB-200-2011 birds dataset. $x$-axis: noise rate $p$, $y$-axis: NMI.(a-c) NMI of triplet loss with random semi-hard mining, fixed semi-hard mining, and marginal loss with random semi-hard mining, respectively. (d) the ratio of NMI for noise rate $p$ over NMI when there are $1-p$ data samples (topline) for all three cases.}
\label{fig:nmi_cub_init_rn34}
\end{figure}

\begin{figure}
\centering
a)
\epsfig{file=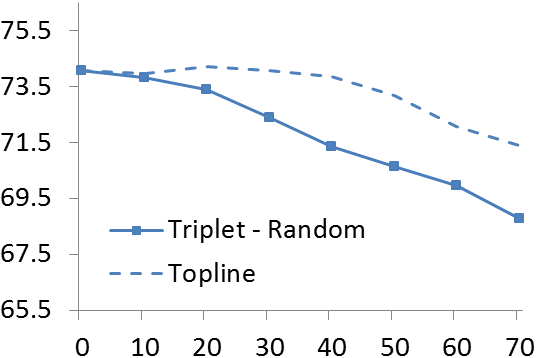,height=25mm}
b)
\epsfig{file=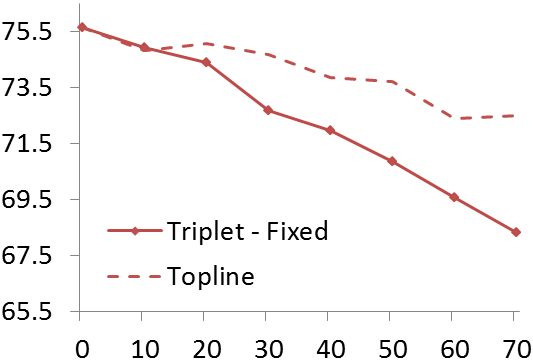,height=25mm}
\quad
c)
\epsfig{file=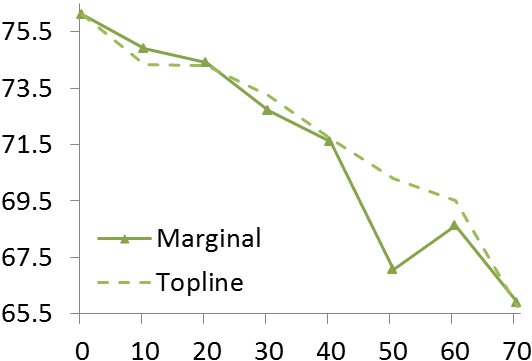,height=25mm}
d)
\epsfig{file=nmi-flo-all.png,height=25mm}
\caption{Clustering results reported on Oxford-102 flowers dataset. $x$-axis: noise rate $p$, $y$-axis: NMI.(a-c) NMI of triplet loss with random semi-hard mining, fixed semi-hard mining, and marginal loss with random semi-hard mining, respectively. (d) the ratio of NMI for noise rate $p$ over NMI when there are $1-p$ data samples (topline) for all three cases.}
\label{fig:nmi_oxf_init_rn34}
\end{figure}

In the clustering tasks, we use the Normalized Mutual Information (NMI) metrics to quantify the clustering quality.
$NMI = I(\Omega,C) / \sqrt{H(\Omega)H(C)}$, with $C = {c_1, . . . , c_n}$ being the clustering alignments, and $\Omega = {\omega_1, . . . , \omega_n}$ being the given groundtruth clusters (ie. class labels). Here $I(·, ·)$ and $H(·)$ denotes mutual information and entropy respectively. We use K-means algorithm for clustering.

Because measuring clustering quality takes into account all nearby neighbors instead of just the nearest one, the difference in NMI between methods are narrower than in Rec@1.
Still, the results in the clustering task agree with our conclusions from the image retrieval task. Using random semi-hard mining with triplet loss is more robust to label noise than fixed semi-hard mining and good minimziation helps to make marginal loss more robust to label noise. 

In SOP dataset (Fig. \ref{fig:nmi_sop_rn34}), the deterioration of triplet loss with fixed semi-hard mining increases after 40\% while random semi-hard mining still retains good relative performance after 50\%. Meanwhile marginal loss degrades much faster than both versions of triplet loss. The difference is easier to view when we compare the ratios between the NMI under noise and the NMI with few data.

In CUB and Flower datasets (Fig.~\ref{fig:nmi_cub_init_rn34} and Fig.~\ref{fig:nmi_oxf_init_rn34}), we observe the same difference between triplet loss with fixed or random semi-hard mining. On the other hand,marginal loss is relatively as robust as triplet loss with random semi-hard mining. This fact, as shown in the paper, can be contributed by initialization with pretrained models.


\bibliographystyle{abbrv}
\bibliography{Main}
\end{document}